%
\documentclass[runningheads]{llncs}
\usepackage{graphicx}
\usepackage[linesnumbered,ruled,vlined]{algorithm2e}
\usepackage{mathrsfs}

\usepackage{amsmath}

\usepackage[show]{chato-notes}

\usepackage{tikz}
\usepackage{gincltex}

\SetKwInput{KwInput}{Input}                
\SetKwInput{KwOutput}{Output}              
\newcommand{\htheta}{\hat{\theta}}
\newcommand{\ruleset}{M}
\newcommand{\assign}{\leftarrow}
\newcommand{\turs}{\textsc{Turs} }
\renewcommand\vec{\boldsymbol}
%

\begin{document}
%
\title{Truly Unordered Probabilistic Rule Sets\\for Multi-class Classification}

\author{Lincen Yang\orcidID{0000-0003-1936-2784} \and
Matthijs van Leeuwen\orcidID{0000-0002-0510-3549}}
\authorrunning{Yang, L and van Leeuwen, M.}

\institute{LIACS, Leiden University, Leiden, The Netherlands \\
\email{{l.yang, m.van.leeuwen}@liacs.leidenuniv.nl}}

%
%
\maketitle              
\begin{abstract}
Rule set learning has long been studied and has recently been frequently revisited due to the need for interpretable models. Still, existing methods have several shortcomings: 1) most recent methods require a binary feature matrix as input, while learning rules directly from numeric variables is understudied; 2) existing methods impose orders among rules, either explicitly or implicitly, which harms interpretability; and 3) currently no method exists for learning probabilistic rule sets for multi-class target variables (there is only one for probabilistic rule lists). 

We propose \textsc{Turs}, for Truly Unordered Rule Sets, which addresses these shortcomings. We first formalize the problem of learning truly unordered rule sets. To resolve conflicts caused by overlapping rules, i.e., instances covered by multiple rules, we propose a novel approach that exploits the probabilistic properties of our rule sets. We next develop a two-phase heuristic algorithm that learns rule sets by carefully growing rules. An important innovation is that we use a surrogate score to take the global potential of the rule set into account when learning a local rule.

Finally, we empirically demonstrate that, compared to non-probabilistic and (explicitly or implicitly) ordered state-of-the-art methods, our method learns rule sets that not only have better interpretability but also better predictive performance. 


\end{abstract}
\section{Introduction} \label{sec:intro}
When using predictive models in sensitive real-world scenarios, such as in health care, analysts seek for intelligible and reliable explanations for predictions. Classification rules have considerable advantages here, as they are directly readable by humans. While rules all seem alike, however, some are more interpretable than others. The reason lies in the subtle differences of how rules form a model. Specifically, rules can form an unordered \emph{rule set}, or an explicitly ordered \emph{rule list}; further, they can be categorized as probabilistic or non-probabilistic. 

In practice, probabilistic rules should be preferred because they provide information about the uncertainty of the predicted outcomes, and thus are useful when a human is responsible to make the final decision, as the expected ``utility" can be calculated. 
 Meanwhile, unordered rule sets should also be preferred, as they have better properties regarding interpretability than ordered rule lists. 
 While no agreement has been reached on the precise definition of interpretability of machine learning models 
 \cite{murdoch2019interpretable,molnar2020interpretable}, we specifically treat interpretability with domain experts in mind. From this perspective, a model's interpretability intuitively depends on two aspects: the degree of difficulty for a human to comprehend the model itself, and to understand a single prediction. Unordered probabilistic rule sets are favorable with respect to both aspects, for the following reasons. First, comprehending ordered rule lists requires comprehending not only each individual rule, but also the relationship among the rules, while comprehending unordered rule sets requires only the former. Second, the explanation for a single prediction of an ordered rule list must contain the rule that the instance satisfies, together with all of its preceding rules, which becomes incomprehensible when the number of preceding rules is large. 

Further, crucially, existing methods for rule set learning claim to learn unordered rule sets, but most of them are not truly unordered. The problem is caused by \emph{overlap}, i.e., a single instance satisfying multiple rules. Ad-hoc schemes are widely used to resolve prediction conflicts caused by overlaps, typically by ranking the involved rules with certain criteria and always selecting the highest ranked rule \cite{zhang2020diverseRuleSets,lakkaraju2016interpretable} (e.g., the most accurate one). This, however, imposes implicit orders among rules, making them entangled instead of truly unordered. 

This can badly harm interpretability: to explain a single prediction for an instance, it is now insufficient to only provide the rules the instance satisfies, because other higher-ranked rules that the instance does \emph{not} satisfy are also part of the explanation. For instance, imagine a patient is predicted to have \emph{Flu} because they have \emph{Fever}. If the model also contains the higher-ranked rule \emph{``Blood in stool $\rightarrow$ Dysentery"}, the explanation should include the fact that \emph{``Blood in stool"} is not true, because otherwise the prediction would change to \emph{Dysentery}. If the model contains many rules, it becomes impractical to have to go over all higher-ranked rules for each prediction. 

Learning truly unordered probabilistic rule sets is a very challenging problem though. Classical rule set learning methods usually adopt a separate-and-conquer strategy, often sequential covering: they iteratively find the next rule and remove instances satisfying this rule. This includes 1) binary classifiers that learn rules only for the ``positive" class \cite{furnkranz2012foundations}, and 2) its extension to multi-class targets by the one-versus-rest paradigm, i.e., learning rules for each class one by one \cite{cohen1995ripper,clark1991cn2Improve}. Importantly, by iteratively removing instances the \emph{probabilistic predictive conflicts} caused by overlaps, i.e., rules having different probability estimates for the target, are ignored. Recently proposed rule learning methods go beyond separate-and-conquer by leveraging discrete optimization techniques \cite{zhang2020diverseRuleSets,wang2017bayesian,yang2021learning,lakkaraju2016interpretable,dash2018boolean}, but this comes at the cost of requiring a binary feature matrix as input. Moreover, these methods are neither probabilistic nor truly unordered, as they still use ad-hoc schemes to resolve predictive conflicts caused by overlaps. 

\emph{Approach and contributions.}
To tackle these challenges and learn truly unordered probabilistic rules, we first formalize rule sets as probabilistic models. We adopt a probabilistic model selection approach for rule set learning, for which we design a criterion based on the minimum description length (MDL) principle \cite{grunwald2019minimum}. Second, we propose a novel surrogate score based on decision trees that we use to evaluate the potential of incomplete rule sets. Third, we are the first to design a rule learning algorithm that deals with probabilistic conflicts caused by overlaps already during the rule learning process. We point out that rules that have been added to the rule set may become obstacles for new rules, and hence carefully design a two-phase heuristic algorithm, for which we adopt diverse beam search \cite{matthijs2012diverse}. 
Last, we benchmark our method, named \textsc{Turs}, for Truly Unordered Rule Sets, against a wide range of methods. We show that the rule sets learned by \textsc{Turs}, apart from being probabilistic and truly unordered, have better predictive performance than existing rule list and rule set methods.

\section{Related Work} \label{sec:related}

\noindent
\emph{Rule lists.}
Rules in a rule list are connected by \textsc{if-then-else} statements. Existing methods include CBA~\cite{liu1998CBA}, ordered CN2~\cite{clark1989cn2}, PART~\cite{frank1998generating}, and the recently proposed CLASSY~\cite{proencca2020interpretable} and Bayesian rule list~\cite{yang2017scalable-bayesian-rule-list}. We argue that rule lists are more difficult to interpret than rule sets because of their explicit orders. 

\smallskip \noindent
\emph{One-versus-rest learning.}
This category focuses on only learning rules for a single class label, i.e., the ``positive" class, which is already sufficient for binary classification \cite{wang2017bayesian,dash2018boolean,yang2021learning}. For multi-class classification, two approaches exist. The first, taken by RIPPER~\cite{cohen1995ripper} and C4.5~\cite{quinlan2014c4}, is to learn each class in a certain order. After all rules for a single class have been learned, all covered instances are removed (or those with this class label). The resulting model is essentially an ordered list of rule sets, and hence is more difficult to interpret than rule set.

The second approach does not impose an order among the classes; instead, it learns a set of rules for each class against all other classes. The most well-known are unordered-CN2 and FURIA~\cite{clark1991cn2Improve,huhn2009furia}. 
FURIA avoids dealing with conflicts of overlaps by using all rules for predicting unseen instances; as a result, it cannot provide a single rule to explain its prediction. Unordered-CN2, on the other hand, handles overlaps by ``combining" all overlapping rules into a ``hypothetical" rule, which sums up all instances in all overlapping rules and hence ignoring probabilistic conflicts for constructing rules. In Section~\ref{sec:exp}, we show that our method learns smaller rule sets with better predictive performance than unordered-CN2.

\smallskip \noindent
\emph{Multi-class rule sets.}
Very few methods exist for directly learning rules for multi-class targets, which is algorithmically more challenging than the one-versus-rest paradigm, as the separate-and-conquer strategy is not applicable. 
To the best of our knowledge, the only existing methods are IDS~\cite{lakkaraju2016interpretable} and DRS~\cite{zhang2020diverseRuleSets}. Both are neither probabilistic nor truly unordered. To handle conflicts of overlaps, IDS follows the rule with the highest F1-score, and DRS uses the most accurate rule. 

Last, different but related approaches include 1) decision tree based methods such as CART~\cite{breiman1984classification}, which produce rules that are forced to share many ``attributes" and hence are longer than necessary, as we will empirically demonstrate in Section~\ref{sec:exp}, and 2) a Bayesian rule mining~\cite{gay2012bayesian} method, which adopts naive bayes  with the mined rules for prediction, and hence does not produce a rule set model in the end. The `lazy learning' approach for rule-based models can also avoid the conflicts of overlaps~\cite{veloso2006lazy}, but no global rule set model describing the whole dataset is constructed in this case.

\section{Rule Sets as Probabilistic Models}\label{section:rulesetModel}

We first formalize individual rules as \emph{local} probabilistic models, and then define rule sets as \emph{global} probabilistic models. The key challenge lies in how to define $P(Y=y|X=x)$ for an instance $(x,y)$ that is covered by multiple rules. 

\subsection{Probabilistic Rules}
Denote the input random variables by $X = (X_1, \ldots, X_d)$, where each $X_i$ is a one-dimensional random variable representing one dimension of $X$, and denote the categorical target variable by $Y\in\mathscr{Y}$. Further, denote the dataset from which the rule set can be induced as $D = \{(x_i, y_i)\}_{i \in [n]}$, or $(x^n, y^n)$ for short. Each $(x_i,y_i)$ is an instance. Then, a probabilistic rule $S$ is written as
\begin{equation}
(X_1 \in R_1 \land X_2 \in R_2 \land \ldots) \rightarrow P_S(Y),
\end{equation}
where each $X_i \in R_i$ is called a \emph{literal} of the \emph{condition} of the rule. Specifically, each $R_i$ is an interval (for a quantitative variable) or a set of categorical levels (for a categorical variable). 

A probabilistic rule of this form describes a subset $S$ of the full sample space of $X$, such that for any $x \in S$, the conditional distribution $P(Y | X=x)$ is approximated by the probability distribution of $Y$ conditioned on the event $\{X \in S\}$, denoted as $P(Y | X \in S)$. Since in classification $Y$ is a discrete variable, we can parametrize $P(Y|X\in S)$ by a parameter vector $\vec{\beta}$, in which the $j$th element $\beta_j$ represents $P(Y=j|X\in S)$, for all $j \in \mathscr{Y}$. We therefore denote $P(Y | X \in S)$ as $P_{S, \vec{\beta}}(Y)$, or $P_S(Y)$ for short. To estimate $\vec{\beta}$ from data, we adopt the maximum likelihood estimator, denoted as $P_{S, \hat{\vec{\beta}}}(Y)$, or $\hat{P}_S(Y)$ for short.

Further, if an instance $(x,y)$ satisfies the condition of rule $S$, we say that $(x,y)$ is \emph{covered} by $S$. Reversely, the \emph{cover} of $S$ denotes the instances it covers. When clear from the context, we use $S$ to both represent the rule itself and/or its cover, and define the number of covered instances $|S|$ as its \emph{coverage}.

\subsection{Truly Unordered Rule Sets as Probabilistic Models} \label{subsec:probRuleSet}

While a rule set is simply a set of rules, the challenge lies in how to define rule sets as probabilistic models while keeping the rules truly unordered. That is, how do we define $P(Y|X=x)$ given a rule set $\ruleset$, i.e., a model, and its parameters? We first explain how to do this for a single instance of the training data, using a simplified setting where at most two rules cover the instance. We then discuss---potentially unseen---test instances and extend to more than two rules covering an instance. Finally, we define a rule set as a probabilistic model.


\medskip \noindent
\textbf{Class probabilities for a single training instance.} 
Given a rule set $\ruleset$ with $K$ individual rules, denoted $\{S_i\}_{i \in [K]}$, any instance $(x,y)$ falls into one of four cases: 1) exactly one rule covers $x$; 2) at least two rules cover $x$ and no rule's cover is the subset of another rule's cover (\emph{multiple non-nested}); 3) at least two rules cover $x$ and one rule's cover is the subset of another rule's cover (\emph{multiple nested}); and 4) no rule in $M$ covers $x$.

To simplify the notation, we here consider at most two rules covering an instance---we later describe how we can trivially extend to more than two rules.

\smallskip \noindent
\emph{Covered by one rule.} 
When exactly one rule $S \in \ruleset$ covers $x$, we use $P_S(Y)$ to ``approximate" the conditional probability $P(Y|X=x)$. To estimate $P_S(Y)$ from data, we adopt the maximum likelihood (ML) estimator $\hat{P}_S(Y)$, i.e.,
\begin{equation}
\hat{P}_S(Y = j) = \frac{|\{(x,y): x \in S, y = j\}|}{|S|}, \forall j \in \mathscr{Y}. 
\end{equation}
Note that we do not exclude instances in $S$ that are also covered by other rules (i.e., in overlaps) for estimating $P_S(Y)$. Hence, the probability estimation for each rule is independent of other rules; as a result, each rule is \emph{self-standing}, which forms the foundation of a truly unordered rule set. 

\smallskip \noindent
\emph{Covered by two non-nested rules.}
Next, we consider the case when $x$ is covered by 
$S_i$ and $S_j$, and neither $S_i \subseteq S_j$ nor $S_j \subseteq S_i$, i.e., the rules are non-nested.  


When an instance is covered by two non-nested, partially overlapping rules, we interpret this as probabilistic \emph{uncertainty}: we cannot tell whether the instance belongs to one rule or the other, and therefore approximate its conditional probability by the \emph{union} of the two rules. That is, in this case we approximate $P(Y|X=x)$ by $P(Y|X \in S_i \cup S_j)$, and we estimate this with its ML estimator $\hat{P}(Y|X \in S_i \cup S_j)$, using all instances in $S_i \cup S_j$. 

This approach is particularly useful when the estimator of $P(Y|X \in S_i \cap S_j)$, i.e., conditioned on the event $\{X \in S_i \cap S_j\}$, is indistinguishable from $\hat{P}(Y|X \in S_i)$ and $\hat{P}(Y|X \in S_j)$. Intuitively, this can be caused by two reasons: 1) $S_i \cap S_j$ consists of very few instances, so the variance of the estimator for $P(Y|X \in S_i \cap S_j)$ is large; 2) $P(Y|X \in S_i \cap S_j)$ is just very similar to $P(Y|X \in S_i)$ and $P(Y|X \in S_i)$, which makes it undesirable to create a separate rule for $S_i \cap S_j$. Our model selection approach, explained in Section~\ref{sec:model_selection}, will ensure that a rule set with non-nested rules has high goodness-of-fit only if this `uncertainty' is indeed the case.

\smallskip \noindent
\emph{Covered by two nested rules.}
When $x$ is covered by both $S_i$ and $S_j$, and $S_i$ is a subset of $S_j$, i.e., $x \in S_i \subseteq S_j$, the rules are nested\footnote{Note that ``nestedness'' is based on the rules' covers rather than on their conditions. For instance, if $S_i$ is $X_1 <= 1$ and $S_j$ is $X_2 <= 1$, $S_i$ and $S_j$ could still be nested.}.
In this case, we approximate $P(Y|X=x)$ by $P(Y | X \in S_i)$ and interpret $S_i$ as an \emph{exception} of $S_j$. 
Having such nested rules to model such exceptions is intuitively desirable, as it allows to have general rules covering large parts of the data while being able to model smaller, deviating parts. 
In order to preserve the self-standing property of individual rules, for $x \in S_j \setminus S_i$ we still use $P(Y|X \in S_j)$ rather than $P(Y | X \in S_j \setminus S_i)$. Although this might seem counter-intuitive at first glance, using $P(Y | X \in S_j \setminus S_i)$ would implicitly impose an order between $S_j$ and $S_i$, or---equivalently---implicitly change $S_j$ to another rule that only covers instances in $S_j \land \neg S_i$. 
 
\smallskip \noindent
\emph{Not covered by any rule.}
When no rule in $M$ covers $x$, we say that $x$ belongs to the so-called ``else rule'' that is part of every rule set and equivalent to $x \notin \bigcup_{i} S_i$. Thus, we approximate $P(Y|X=x)$ by $P(Y | X \notin \bigcup_{i} S_i)$. We denote the else rule by $S_0$ and write $S_0 \in \ruleset$ for the else rule in $\ruleset$. Observe that the else rule is the only rule in every rule set that depends on the other rules and is therefore not self-standing; however, it will also have no overlap with other rules by definition.

\medskip \noindent
\textbf{Predicting for a new instance.} \label{subsubsec:new_data}
When an unseen instance $x'$ comes in, we predict $P(Y|X=x')$ depending on which of the aforementioned four cases it satisfies. An important question is whether we always need access to the training data, i.e., whether the probability estimates we obtain from the training data points are sufficient for predicting $P(Y|X=x')$.
Specifically, if $x'$ is covered by non-nested $S_i$ and $S_j$, $P(Y|X=x')$ is predicted as $\hat{P}(Y|X \in S_i \cup S_j)$. However, if there are no training data points covered both by $S_i$ and $S_j$, then we would not obtain $\hat{P}(Y|X \in S_i \cup S_j)$ in the training phase. Nevertheless, in this case we have  $|S_i \cup S_j| = |S_i| + |S_j|$, and hence
\begin{equation}
    \hat{P}(Y|X \in S_i \cup S_j) = \frac{|S_i| \hat{P}(Y|X\in S_i) + |S_j| \hat{P}(Y|X\in S_j)}{|S_i| + |S_j|}.
\end{equation}

Thus, if $x'$ is covered by one rule, two nested rules, or no rule in $M$, the corresponding probability estimates are already obtained during training. Thus, we conclude that access to the training data is not necessary for prediction.

\medskip \noindent
\textbf{Extension to overlaps of multiple rules.}
Whenever an instance $x$ is covered by multiple rules, denoted $J = \{S_i, S_j, S_k, ...\}$, three cases may happen. The first case is all rules in $J$ are nested. Without loss of generality, assume that $S_i \subseteq S_j \subseteq S_k \subseteq ...$; then, following the rationale for case of two nested rules, $P(Y|X=x)$ should be approximated by $P_{S_i}(Y)$. 
Therefore, when $x$ is covered by multiple nested rules, only the ``smallest" rule matters and we can act as if $x$ is only covered by that single rule. 

The second case is that all rules in $J$ are non-nested with each other. Following the solution for modeling two non-nested rules, we use $P(Y|X \in \bigcup_{S\in J} S)$. 

The third case is a mix of the previous two cases, i.e., rules in $J$ are partially nested. In this case, we iteratively go over all $S \in J$: if there exists an $S' \in J$ satisfying $S' \subseteq S$ we remove $S$ from $J$, and continue iterating until no nested overlap in $J$ remains. If one single rule is left, we act as if $x$ is covered by that single rule; otherwise, we follow the paradigm of modeling the non-nested overlaps with the rules left in $J$.


\medskip \noindent
\textbf{Probabilistic rule sets.}
We can now build upon the previous to define rule sets as probabilistic models. 
Formally, the probabilistic model corresponding to a rule set $\ruleset$ is a family of probability distributions, denoted $P_{\ruleset, \theta}(Y|X)$ and parametrized by $\theta$. Specifically, $\theta$ is a parameter vector representing all necessary probabilities of $Y$ conditioned on events $\{X \in G\}$, where $G$ is either a single rule or the union of multiple rules. $\theta$ is estimated from data by estimating each $P(Y|X \in G)$ by its maximum likelihood estimator. The resulting estimated vector is denoted as $\htheta$ and contains $\hat{P}(Y|X \in G)$ for all ${G \in \mathscr{G}}$, where $\mathscr{G}$ consists of all individual rules and the unions of overlapping rules in $M$. 

Finally, we assume the dataset $D = (x^n, y^n)$ to be i.i.d. Specifically, let us define $(x,y) \vdash G$ for the following two cases: 1) when $G$ is a single rule (including the else rule), then $(x, y) \vdash G \iff x \in G$; and 2) when $G$ is a union of multiple rules, e.g., $G = \bigcup S_i$,  then $(x, y) \vdash G \iff x \in \bigcap S_i$. We then have
\begin{equation}
    P_{\ruleset, \theta}(y^n|x^n) = \prod_{G \in \mathscr{G}} \prod_{(x,y) \vdash G} P(Y = y | X \in G).
\end{equation}

\section{Rule Set Learning as Probabilistic Model Selection}
\label{sec:model_selection}

Exploiting the formulation of rule sets as probabilistic models, we define the task of learning a rule set as a probabilistic model selection problem. Specifically, we use the minimum description length (MDL) principle for model selection. 

\subsection{Normalized Maximum Likelihood Distributions for Rule Sets}

The MDL principle is one of the best off-the-shelf model selection methods and has been widely used in machine learning and data mining \cite{grunwald2019minimum}. 
Although rooted in information theory, it has been recently shown that MDL-based model selection can be regarded as an extension of Bayesian model selection \cite{grunwald2019minimum}. 

The core idea of MDL-based model selection is to assign a single probability distribution to the data given a rule set $\ruleset$, the so-called \emph{universal distribution} denoted by $P_{\ruleset}(Y^n|X^n=x^n)$. Informally, $P_{\ruleset}(Y^n|X^n=x^n)$ should be a representative of the rule set model---as a family of probability distributions---$\{P_{\ruleset, \theta}(y^n | x^n)\}_\theta$. The theoretically optimal ``representative" is defined to be the one that has minimax regret, i.e., 
\small{
\begin{equation}
\label{eq:define_regret}
	  \arg \min_{P_{\ruleset}} \max_{z^n \in \mathscr{Y}^n} -\log_2 P_{\ruleset} (Y^n = z^n|X^n = x^n) - \left(-\log_2 P_{\htheta(x^n, z^n)} (Y^n = z^n|X^n = x^n)\right).
\end{equation}
}
We write the parameter estimator as $\htheta(x^n, z^n)$ to emphasize that it depends on the values of the target variables $Y^n$. The unique solution to $P_{\ruleset}$ of Equation~\ref{eq:define_regret} is the so-called normalized maximum likelihood (NML) distribution:
\begin{equation}
    P^{NML}_{\ruleset}(Y^n=y^n|X^n=x^n) = \frac{P_{\ruleset, \htheta(x^n, y^n)}(Y^n=y^n|X^n=x^n)}{\sum_{z^n \in \mathscr{Y}^n} P_{\ruleset, \htheta(x^n, z^n)}(Y^n = z^n|X^n=x^n)}.
\end{equation}
That is, we ``normalize" the distribution $P_{\ruleset, \htheta}(.)$ to make it a proper probability distribution, which requires the sum of all possible values of $Y^n$ to be 1. Hence, we have $\sum_{z^n \in \mathscr{Y}^n} P^{NML}_{\ruleset}(Y^n=z^n|X^n=x^n) = 1$ \cite{grunwald2019minimum}.

\subsection{Approximating the NML Distribution}

A crucial difficulty in using the NML distribution in practice is the computation of the normalizing term $\sum_{z^n} P_{\htheta(x^n, z^n)}(Y^n=z^n|X^n=x^n)$. Efficient algorithms almost only exist for exponential family models \cite{grunwald2019minimum}, hence we approximate the term by the product of the normalizing terms for the individual rules. 

\smallskip
\noindent \textbf{NML distribution for a single rule.}
For an individual rule $S \in \ruleset$, we write all instances covered by $S$ as $(x^S, y^S)$, in which $y^S$ can be regarded as a realization of the random vector $Y^S = (Y, ..., Y)$, and $Y^S$ takes values in $\mathscr{Y}^{|S|}$, the $|S|$-ary Cartesian power of $\mathscr{Y}$. Then, the NML distribution for $P_S(Y)$ equals
\begin{equation}
    P^{NML}_S(Y^S = y^{S}|X^S = x^S) = \frac{\hat{P}_S(Y^S = y^S|X^S = x^S)}{\sum_{z^S \in \mathscr{Y}^S} \hat{P}_S(Y^S = z^S|X^S = x^S)}.
\end{equation}
Note that $\hat{P}_{S}$ depends on the values of $z^S$. As $\hat{P}_S(Y)$ is a categorical distribution, the normalizing term can be written as $\mathcal{R}(|S|, |\mathscr{Y}|)$, a function of $|S|$---the rule's coverage---and $|\mathscr{Y}|$---the number of unique values that $Y$ can take~\cite{mononen:08:sub-lin-stoch-comp}:
\begin{equation}
    \mathcal{R}(|S|, |\mathscr{Y}|) = \sum_{z^S \in \mathscr{Y}^S} \hat{P}_S(Y^S = z^S|X^S = x^S),
\end{equation}
which can be efficiently calculated in sub-linear time~\cite{mononen:08:sub-lin-stoch-comp}.

\smallskip
\noindent \textbf{The approximate NML distribution.}
We propose to approximate the normalizing term of $P^{NML}_{\ruleset}$ as the product of the normalizing terms of $P^{NML}_S$ for all $S \in \ruleset$, and propose the approximate-NML distribution as our model selection criterion:
\begin{equation}
    P^{apprNML}_{\ruleset}(Y^n =y^n | X^n=x^n) = \frac{P_{\ruleset, \htheta(x^n, y^n)}(Y^n=y^n|X^n=x^n)}{\prod_{S \in \ruleset} \mathcal{R}(|S|, |\mathscr{Y}|)}.
\end{equation}
Note that the sum over all $S \in \ruleset$ \emph{does} include the ``else rule" $S_0$. Finally, we can formally define the optimal rule set $\ruleset^*$ as
\begin{equation}
    \ruleset^* = \arg \max_{\ruleset} P^{apprNML}_{\ruleset}(Y^n =y^n | X^n=x^n).
\end{equation}
The rationale of using the approximate-NML distribution is as follows. First, it is equal to the NML distribution for a rule set without any overlap, as follows.
\begin{proposition}
Given a rule set $\ruleset$ in which for any $S_i, S_j \in \ruleset$, $S_i \cap S_j = \emptyset$, then $P^{NML}_{\ruleset}(Y^n=y^n|X^n=x^n) = P^{apprNML}_{\ruleset}(Y^n=y^n|X^n=x^n)$.
\end{proposition}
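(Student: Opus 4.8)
The plan is to unfold both definitions and isolate the single place where $P^{apprNML}_{\ruleset}$ departs from the exact $P^{NML}_{\ruleset}$, namely the treatment of instances covered by more than one rule; once the rules are pairwise disjoint, this source of discrepancy disappears and the two quantities coincide factor by factor.

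First I would recall that $P^{NML}_{\ruleset}(y^n \mid x^n)$ is the normalized maximum likelihood distribution for the model class that the rule set $\ruleset$ induces: each rule $S_i$ carries a local multinomial model over the class labels of the instances it covers, the instances covered by no rule are handled by an ``else'' rule $S_0$, and the resulting distribution on $y^n$ is normalized by the joint parametric complexity $\mathcal{C}_{\ruleset} = \sum_{z^n} P(z^n \mid \htheta(z^n), \ruleset)$. The approximation $P^{apprNML}_{\ruleset}$ replaces both the maximized likelihood and this normalizing constant by a product of the corresponding local quantities, one factor per rule including $S_0$, the $i$-th factor being the multinomial NML on $n_i = |S_i|$ points with $K$ classes; i.e.\ $P^{apprNML}_{\ruleset}(y^n\mid x^n) = \prod_{i=0}^{|\ruleset|} P^{NML}_{\mathrm{mult}}(y^{S_i})$ with normalizer $\prod_i \mathcal{C}(n_i, K)$. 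This product form is only an approximation in general precisely because the sets $S_i$ need not partition $\{1,\dots,n\}$.

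Next, under the hypothesis $S_i \cap S_j = \emptyset$ for all $i \neq j$, the sets $S_0, S_1, \dots, S_{|\ruleset|}$ form a genuine partition of the $n$ instances, and I would exploit this in two steps. \emph{(i) The maximized likelihood factorizes exactly.} Since every instance is covered by exactly one rule (or by none, in which case it lies in $S_0$), no probabilistic aggregation across overlapping rules occurs, so $P(y^n\mid \htheta(y^n), \ruleset) = \prod_{i=0}^{|\ruleset|} P(y^{S_i}\mid \htheta_i)$, and each $\htheta_i$, being the MLE of a multinomial fitted to the labels in $S_i$, depends only on $y^{S_i}$; this is exactly the numerator of $P^{apprNML}_{\ruleset}$. \emph{(ii) The normalizing constant factorizes exactly.} Because the blocks $S_i$ are disjoint and exhaust all $n$ coordinates, the sum over $z^n$ splits into independent sums over the blocks: $\mathcal{C}_{\ruleset} = \sum_{z^n}\prod_i P(z^{S_i}\mid \htheta_i) = \prod_i \sum_{z^{S_i}} P(z^{S_i}\mid \htheta_i) = \prod_i \mathcal{C}(n_i, K)$, which is exactly the normalizer of $P^{apprNML}_{\ruleset}$. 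Dividing (i) by (ii) yields $P^{NML}_{\ruleset}(y^n\mid x^n) = \prod_i P^{NML}_{\mathrm{mult}}(y^{S_i}) = P^{apprNML}_{\ruleset}(y^n\mid x^n)$.

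The main obstacle I anticipate is not the combinatorial factorization, which is routine once disjointness is in hand, but making step (i) airtight: one must verify that the ``truly unordered'' combination rule and the else-rule bookkeeping used in the main paper really do collapse to ``each instance is scored by its unique covering rule'' when the coverages are disjoint, with no residual normalization or tie-breaking term that survives in $P^{NML}_{\ruleset}$ but is dropped in $P^{apprNML}_{\ruleset}$ (or vice versa), and that both definitions place the uncovered instances into the same block $S_0$. Checking that $\htheta(y^n)$ restricted to a block equals the local MLE $\htheta_i$ is where the care is needed; the rest follows from the Fubini-style interchange of sum and product used in step (ii).
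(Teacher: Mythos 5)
Your proposal is correct and follows essentially the same route as the paper: the heart of the argument is the Fubini-style factorization of the normalizing sum $\sum_{z^n}$ into a product of per-rule sums over disjoint coverage blocks, each yielding the local multinomial complexity $\mathcal{R}(|S|,|\mathscr{Y}|)$, which is exactly the paper's computation. The numerator issue you flag as the delicate point is dispatched by the paper in one line (``the numerators are the same''), since under disjoint coverage the model likelihood is by definition the product of per-rule likelihoods, so your extra care there is harmless but not a divergence in method.
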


\noindent Second, when overlaps exist in $\ruleset$, approximate-NML puts a small extra penalty on overlaps, which is desirable to trade-off overlap with goodness-of-fit: when we sum over all instances in each rule $S \in \ruleset$, the instances in overlaps are ``repeatedly counted". Third, approximate-NML behaves like the Bayesian information criterion (BIC) asymptotically, which follows from the next proposition.
\begin{proposition}
Assume $\ruleset$ contains $K$ rules in total, including the else rule, and we have $n$ instances. Then $\log \left(\prod_{S \in \ruleset} \mathcal{R}(|S|, |\mathscr{Y}|)\right) = \frac{K(|\mathscr{Y}| - 1)}{2} \log n + \mathcal{O}(1)$, where $\mathcal{O}(1)$ is bounded by a constant w.r.t.\ to $n$.
\end{proposition}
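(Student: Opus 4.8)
The plan is to pass to logarithms, turn the product over the $K$ rules into a sum, and analyze each term through the classical asymptotics of the multinomial Shtarkov (maximum-likelihood normalization) sum. Since $\log\left(\prod_{S\in\ruleset}\mathcal{R}(|S|,|\mathscr{Y}|)\right)=\sum_{S\in\ruleset}\log\mathcal{R}(|S|,|\mathscr{Y}|)$, it suffices to understand $\log\mathcal{R}(m,k)$ as a function of the cover size $m$, for the fixed number of classes $k=|\mathscr{Y}|$.

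For a single multinomial model with $k$ outcomes and $m$ observations one has $\mathcal{R}(m,k)=\sum_{h_1+\dots+h_k=m}\binom{m}{h_1,\dots,h_k}\prod_{j}(h_j/m)^{h_j}$. The classical MDL result of Rissanen (equivalently, the Kontkanen--Myllym\"aki recursion $\mathcal{R}(m,k+2)=\mathcal{R}(m,k+1)+\tfrac{m}{k}\mathcal{R}(m,k)$ combined with Stirling's formula) gives
\begin{equation}
\log\mathcal{R}(m,k)=\frac{k-1}{2}\log\frac{m}{2\pi}+\log\frac{\pi^{k/2}}{\Gamma(k/2)}+o(1)\qquad(m\to\infty),
\end{equation}
where $\tfrac{\pi^{k/2}}{\Gamma(k/2)}=\int_{\Delta_{k-1}}\sqrt{\det I(\theta)}\,d\theta$ is the Fisher-information integral over the probability simplex. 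Consequently $g_k(m):=\log\mathcal{R}(m,k)-\frac{k-1}{2}\log m$ converges to a finite constant as $m\to\infty$ and is finite for every $m\ge 1$ (e.g.\ $\mathcal{R}(1,k)=k$), hence $g_k$ is bounded by a constant depending only on $k$. Therefore $\log\mathcal{R}(|S|,|\mathscr{Y}|)=\frac{|\mathscr{Y}|-1}{2}\log|S|+\mathcal{O}(1)$ for every rule $S$, with the $\mathcal{O}(1)$ uniform over cover sizes.

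Summing over the $K$ rules gives $\sum_{S\in\ruleset}\log\mathcal{R}(|S|,|\mathscr{Y}|)=\frac{|\mathscr{Y}|-1}{2}\sum_{S\in\ruleset}\log|S|+\mathcal{O}(1)$, where this $\mathcal{O}(1)$ is at most $K$ times the constant from the previous step. It remains to replace each $\log|S|$ by $\log n$: writing $\log|S|=\log n+\log(|S|/n)$ we get $\sum_{S\in\ruleset}\log|S|=K\log n+\sum_{S\in\ruleset}\log(|S|/n)$, and since the literals defining $\ruleset$ are fixed while the sample grows, each cover obeys $|S|=\Theta(n)$ (for i.i.d.\ data $|S|/n$ tends to the firing probability of $S$, assumed positive, including the else rule), so $\sum_{S\in\ruleset}\log(|S|/n)=\mathcal{O}(1)$. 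Combining the two displays yields $\log\left(\prod_{S\in\ruleset}\mathcal{R}(|S|,|\mathscr{Y}|)\right)=\frac{K(|\mathscr{Y}|-1)}{2}\log n+\mathcal{O}(1)$.

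The main obstacle is estimate $(2)$: a fully self-contained derivation requires controlling the multinomial Shtarkov sum, most cleanly via Stirling's approximation of the dominant multinomial coefficients (or a Laplace/saddle-point argument), showing the sum concentrates near $h_j\approx m\theta_j$ and integrating out the simplex. A secondary subtlety is the $|S|=\Theta(n)$ claim in the last step; without a genericity/positive-probability assumption one only obtains the one-sided bound $\log\left(\prod_{S\in\ruleset}\mathcal{R}(|S|,|\mathscr{Y}|)\right)\le\frac{K(|\mathscr{Y}|-1)}{2}\log n+\mathcal{O}(1)$ from $|S|\le n$, and the stated equality should be read in the regime where every rule keeps covering a non-vanishing fraction of the data.
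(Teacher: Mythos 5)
Your proposal is correct and follows essentially the same route as the paper's proof: apply Rissanen's asymptotic $\log\mathcal{R}(m,|\mathscr{Y}|)=\frac{|\mathscr{Y}|-1}{2}\log m+\mathcal{O}(1)$ to each rule and then replace $\log|S|$ by $\log n$ under the assumption that each cover grows linearly with $n$, before summing over the $K$ rules. The ``subtlety'' you flag at the end is exactly the paper's explicitly stated mild assumption ($|S|=(\gamma+o(1))n$), so your treatment is, if anything, slightly more careful in making that hypothesis and the one-sided bound explicit.
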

We defer the proofs of the two propositions to the Supplementary Material. 
\section{Learning Truly Unordered Rule Sets from Data}
As our MDL-based model selection criterion unfortunately does not enable efficient search for the optimal model, we resort to heuristics. We first address the challenge of evaluating incomplete rule sets, after which we explain how to grow individual rules in two phases and implement this with beam search. Finally, we show how everything comes together to iteratively learn rule sets from data.

\subsection{Evaluating Incomplete Rule Sets with a Surrogate Score}

When iteratively searching for the next ``best" rule, defining ``best'' is far from trivial: rule coverage and precision are contradicting factors and typical scores therefore combine those two factors in some---more or less---arbitrary way. 

This issue is further aggravated by the iterative rule learning process, in which the intermediate rule set is evaluated as an \emph{incomplete rule set} in each step. Evaluating incomplete rule sets is a challenging task~\cite{furnkranz2005roc}, mainly because any good score needs to simultaneously consider two aspects: 1) how well do all the rules currently in the rule set describe the already covered instances; and 2) what is the ``potential" for the uncovered instances, in the sense that how well can those uncovered instances be described by rules that might be added later?

Without knowing the rules that will be added later, we cannot compute the NML-based criterion for the complete rule set. 
Yet, we should take into account the potential of the uncovered instances. We propose to approximate the latter using a \emph{surrogate score}, which we obtain by fitting a decision tree on the uncovered instances and using the leafs of the resulting tree as a surrogate for ``future" rules. Formally, we define the tree-based surrogate score as
\begin{equation} \label{eq:tree_score}
    L_{\mathcal{T}} (\ruleset) = P^{apprNML}_{\ruleset \oplus \mathcal{T}} (Y^n = y^n | X^n = x^n),
\end{equation}
where $\ruleset \oplus \mathcal{T}$ denotes the surrogate rule set obtained by converting the branches of $\mathcal{T}$ to rules and appending those to $\ruleset$ (parameters are estimated as usual). 

Although the branches of the decision tree learned from the currently uncovered instances may be different from the rules that will later be added to the rule set, using the tree-based surrogate score will make it easier to gradually grow good rule sets. We use decision trees because they are quick to learn and use, and the correspondence of branches to rules makes using them straightforward. We will empirically study the effects of the surrogate score on the predictive performance of rule sets in Section~\ref{sec:exp}.

\subsection{Two-phase Rule Growth}
To avoid having to traverse all possible rules when searching for the rule to add to an incomplete rule set, we resort to a common heuristic: we start with an empty rule and gradually refine it by adding literals---also referred to as \emph{growing} a rule \cite{furnkranz2012foundations}. In contrast to existing methods, we propose a two-phase method.
\begin{figure}[ht]
    \centering
    \includegraphics[width=0.4\textwidth, height=0.3\textwidth]{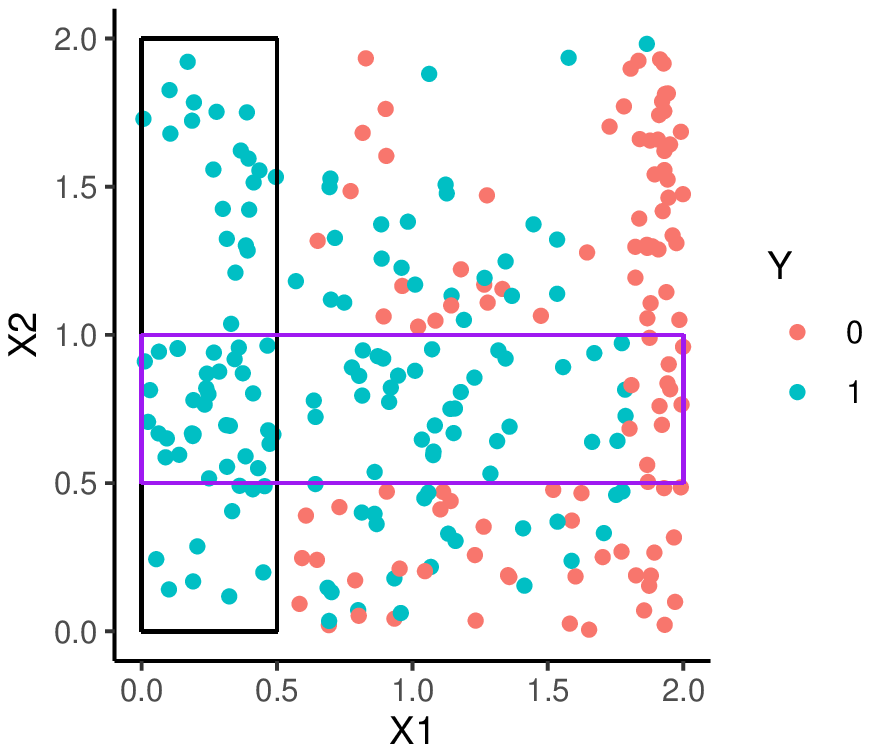}
    \includegraphics[width=0.4\textwidth, height=0.3\textwidth]{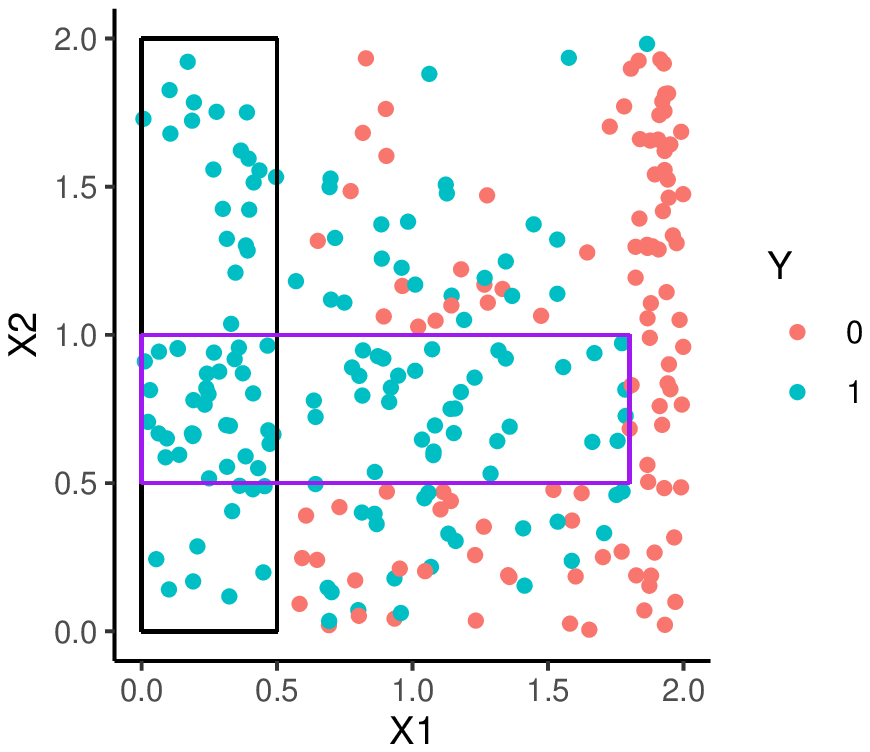}
    \caption{(Left) Simulated data with two overlapping rules: $S_1: X_1 < 0.5$ (outlined in black) and $S_2: 0.5 < X_2 < 1$ (purple). (Right) $S_2$ has  grown to $0.5 < X_2 < 1 \land X_1 < 1.8$, which changes $P(Y|X \in S_2)$ and  resolves the problematic overlap.
    }
    \label{fig:alg1}
\end{figure}
\begin{figure}[ht]
    \centering
    \includegraphics[width=0.4\textwidth, height=0.3\textwidth]{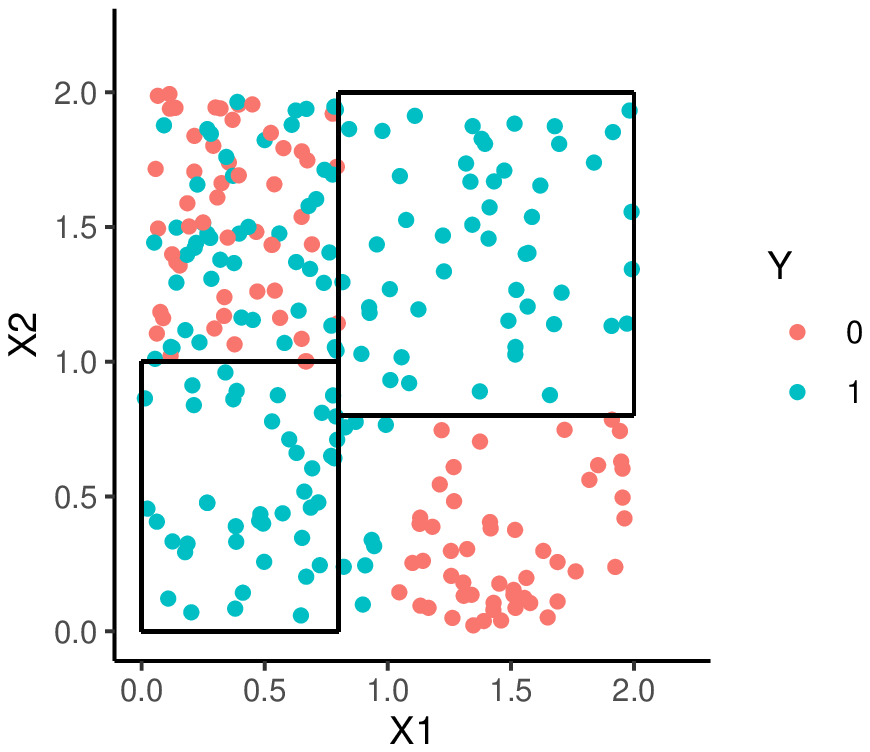}
    \includegraphics[width=0.4\textwidth, height=0.3\textwidth]{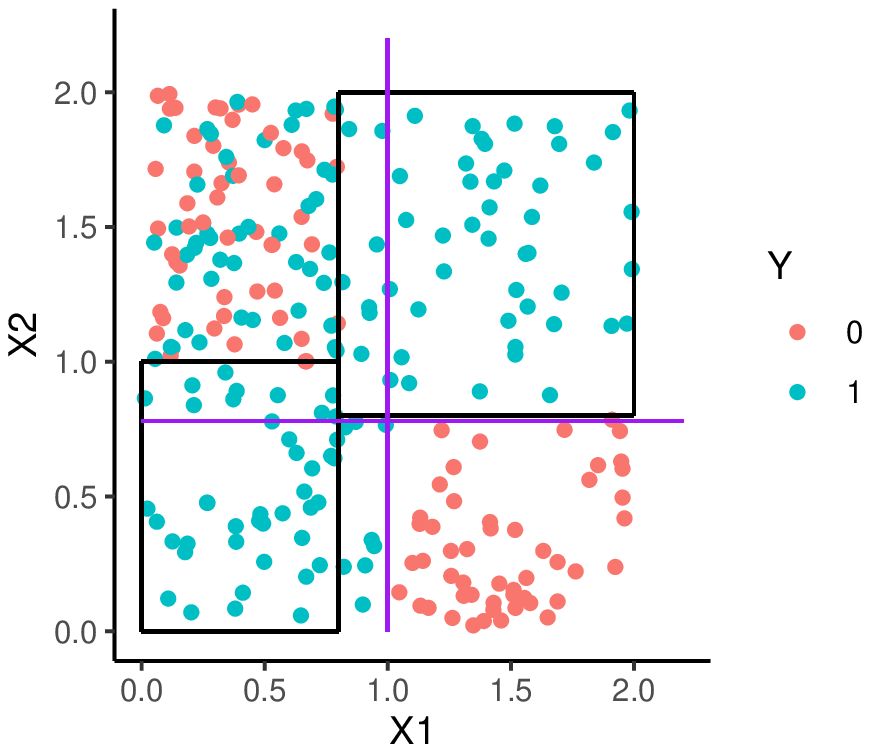}
    \caption{(Left) Simulated data with a rule set containing two rules (black outlines). (Right) Growing a rule to describe the bottom-right instances will create conflicts with existing rules. I.e., adding either $X_1 > 1$ (vertical purple line) or $X_2 < 0.8$ (horizontal purple line) would create a huge overlap that deteriorates the surrogate score (Eq.~\ref{eq:tree_score}).}
    \label{fig:alg2}
\end{figure}
\begin{algorithm}[h] 
\DontPrintSemicolon
  \KwInput{rule set $M$, data $(x^n, y^n)$}
  \KwOutput{A beam that contains the $w$ best rules}
  
  RULE $\assign$ $\emptyset$; Beam $\assign$ [RULE] \tcp*{Initialize the empty rule and beam}
  
  
  BeamList $\assign$ Beam \tcp*{Record all the beams in the beam search}
  
\While{$\operatorname{length}(Beam)  \neq 0$}
  {
    candidates $\assign$ [ ] \tcp*{initialized to store all possible refinements}

    \For {RULE $\in$ Beam}     
    {
        Rs $\assign$ [Append L to RULE for L $\in$ all possible literals] 
        
        candidates.extend(Rs)
    }
    
    Beam $\assign$ the $w$ rules in candidates that have 1) the highest positive $g_{unc}()$, and 2) coverage diversity $> \alpha$ \tcp*{$w$ is the beam width}
    
    \If {$\operatorname{length}$(Beam) $ \neq 0$}
    {    
        BeamList.extend(Beam) \tcp*{extend the BeamList as an array}
    }
    
  }
  
  \For{Rule $\in$ BeamList}
  {
    Beam $\assign$ $w$ rules in BeamList with best $L_{\mathcal{T}}(\ruleset \oplus S_{unc})$ 
  }
  \Return Beam
\caption{Find Next Rule Ignoring Overlaps}
\label{alg:find_next_rule_excl_overlap}
\end{algorithm}	

\noindent \textbf{Motivation.} A rule can only improve the surrogate score---and thus be added to the rule set---if it achieves two goals: 1) it should improve the likelihood of currently uncovered instances (penalized by the approximate-NML normalizing term); and 2) it should not deteriorate the goodness-of-fit of the rule set by creating ``bad'' overlaps. These goals can be conflicting though, for two reasons.

First, it is not necessarily bad to have overlaps between a rule being grown and the current rule set, because the rule and its probability estimates for the target variable may still change. For example, consider the left plot of Figure~\ref{fig:alg1}. If the current rule set consists of $S_1$ (indicated in black), then adding $S_2$ (in purple) would be problematic: this would strongly deteriorate the likelihood of the instances covered by both rules. However, as we further grow $S_2$, as shown in the right plot, we get $P(Y|S_1) = P(Y|S_2)$ and the problem is solved.

Second, rules already in the rule set may become obstacles to growing a new rule. For example, consider the data and rule set with two rules (in black) in Figure~\ref{fig:alg2}. If we want to grow a rule that covers the bottom-right instances, the existing rules form a blockade: the right plot shows how adding either $X_1 > 1$ or $X_2 < 0.8$ to the empty rule (in purple) would create a large overlap with the existing rules, with significantly different probability estimates. 

Therefore, instead of navigating towards the two goals simultaneously, we propose to grow the next rule in two phases: 1) grow the rule as if the instances covered by the (incomplete) rule set are excluded; 2) further grow the rule to eliminate potentially ``bad" overlaps, to further optimize the tree-based score. 

\noindent \textbf{Method.} Given a rule $S$, define $S_{unc}$ as its uncovered ``counterpart", which covers all instances in $S$ not covered by $\ruleset$, i.e., $S_{unc} = S \setminus \cup \{S_i \in \ruleset\}$. Then, given $\ruleset$, the search for the next best rule that optimizes the surrogate tree-based score is divided into two phases. First, we aim to find the $m$ rules for which the uncovered counterparts have the highest surrogate scores, defined as 
\begin{equation}
    L_{\mathcal{T}}(\ruleset \oplus S_{unc}) =  P^{apprNML}_{\ruleset \oplus S_{unc} \oplus \mathcal{T}} (Y^n = y^n | X^n = x^n),
\end{equation}
where $\ruleset \oplus S_{unc} \oplus \mathcal{T}$ denotes $\ruleset$ appended with $S_{unc}$ and all branches of $\mathcal{T}$. Here, $m$ is a user-specified hyperparameter that controls the number of candidate rules that are selected for further refinement in the second phase. In the second phase, we further grow each of these $m$ rules to search for the best one rule that optimizes 
\begin{equation}
        L_{\mathcal{T}}(\ruleset \oplus S) =  P^{apprNML}_{\ruleset \oplus S \oplus \mathcal{T}} (Y^n = y^n | X^n = x^n).
\end{equation}

Given a rule $S$ and its counterpart $S_{unc}$, the score of $S_{unc}$ is an upper-bound on the score of $S$: if $S$ can be further refined to cover exactly what $S_{unc}$ covers, we can obtain $L_{\mathcal{T}}(\ruleset \oplus S_{unc}) = L_{\mathcal{T}}(\ruleset \oplus S_{unc})$. This is often not possible in practice though, and we therefore generate $m$ candidates in the first phase (instead of $1$).

\subsection{Beam Search for Two-phase Rule Growth}

In both phases we aim for growing a rule that optimizes the tree-based score (Equation~\ref{eq:tree_score}); the difference is that we ignore the already covered instances in the first phase. To avoid growing rules too greedily, i.e., adding literals that quickly reduce the coverage of the rule, we use a heuristic that is based on the NML distribution of a single rule and motivated by Foil's information gain \cite{cohen1995ripper}.

\smallskip
\noindent \textbf{Phase 1: rule growth ignoring covered instances.}
We propose the NML-gain to optimize $L_T(\ruleset \oplus S_{unc})$: given two rules $S$ and $Q$, where we obtain $S$ by adding one literal to $Q$, we define the NML-gain as $g_{unc}(S, Q)$:
\begin{equation}
    g_{unc}(S, Q) = \left(\frac{P^{NML}_{{S_{unc}}}( y^{S_{unc}}|x^{S_{unc}})}{|S_{unc}|} - \frac{P^{NML}_{Q_{unc}}( y^{Q_{unc}}|x^{Q_{unc}})}{|Q_{unc}|}\right) |S_{unc}|
\end{equation}
\begin{equation}
    = \left(\frac{\hat{P}_{S_{unc}}( y^{S_{unc}}|x^{S_{unc}})}{\mathcal{R}(|S_{unc}|, |\mathscr{Y}|)\,\,\, |S_{unc}|} - \frac{\hat{P}_{Q_{unc}} ( y^{Q_{unc}}|x^{Q_{unc}})}{\mathcal{R}(|Q_{unc}|, |\mathscr{Y}|) \,\,\, |Q_{unc}|}\right) |S_{unc}|,
\end{equation}

\noindent which we use as the navigation heuristic.

The advantage of having a tree-based score to evaluate rules, besides the navigation heuristic (local score), is that we can adopt beam search,
as outlined in Algorithm~\ref{alg:find_next_rule_excl_overlap}. We start by initializing 1) the rule as an \emph{empty rule} (a rule without any condition), 2) the Beam containing that empty rule, and 3) the BeamRecord to record the rules in the beam search process (Line 1-2). Then, for each rule in the beam, we generate refined candidate rules by adding one literal to it (Ln 5-7). Among all candidates, we select at most $w$ rules with the highest NML-based gain $g_{unc}$, satisfying two constraints: 1) $g_{unc} > 0$; and 2) for each pair of these (at most) $w$ rules, e.g., $S$ and $Q$, their ``coverage diversity" $\frac{|S_{unc} \cap Q_{unc}|}{|S_{unc} \cup Q_{unc}|} > \alpha$, where $\alpha$ is a user-specified parameter that controls the diversity of the beam search \cite{matthijs2012diverse}. We update the Beam with these (at most) $w$ rules (Ln 8-10). We repeat the process until we can no longer grow any rule with positive $g_{unc}$ based on all rules in Beam (Ln 3). Last, among the record of all Beams we obtained during the process, we return the best $w$ rules with the highest tree-based score $L(S_{unc} \cup \ruleset)$ (Ln 11-13). 
\begin{algorithm}[bt]
\DontPrintSemicolon
  \KwInput{training data $(x^n, y^n)$}
  \KwOutput{rule set $M$}
  
  $M \assign \emptyset$; $M\_record$ $\assign [M]$ 
  
  scores $\assign$ [$P_M^{apprNML}(y^n|x^n)$] \tcp*{Record $P_M^{apprNML}$ while growing}
  
  \While{True}
  {
  	$S^*$ $\assign$ FindNextRule$\left(M, (x^n, y^n)\right)$ \tcp*{find the next best rule $S^*$}
  	
  	\If{$S^* = \emptyset$ or $L_{\mathcal{T}}(\ruleset \oplus S) = P_{M \oplus S^*}^{apprNML}(y^n|x^n)$}
  	{
  	    \textbf{Break}
  	}
  	\Else
  	{
  	    $M \assign M \oplus S^*$; $M\_record.append(M)$ \tcp*{update and record $M$}
  	    
  	    scores.append($P_M^{apprNML}(y^n|x^n)$)
  	}
  }
  
  
  \Return the rule set with the maximum score in $M\_record$
    
  	        
  	     
  	         
\caption{Find Rule Set}
\label{alg:ruleset}
\end{algorithm}

\smallskip
\noindent \textbf{Phase 2: rule growth including covered instances.}
We now optimize $L(\ruleset \oplus S)$ and select a rule based on the candidates obtained in the previous step. We first define a navigation heuristic: given two rules $S$ and $Q$, where $S$ is obtained by adding one literal to $Q$, we define the NML-gain $g(S, Q)$ as
\begin{equation}
    g(S, Q) = \left(\frac{\hat{P}_{S}(y^{S_{unc}} | x^{S_{unc}})}{\mathcal{R}(|S_{unc}|, |\mathscr{Y}|)\,\,\, |S_{unc}|} - \frac{\hat{P}_{Q}( y^{Q_{unc}} |  x^{Q_{unc}})}{\mathcal{R}(|Q_{unc}|, |\mathscr{Y}|)\,\,\, |Q_{unc}|}\right) |S_{unc}|.
\end{equation}

Note that the difference between $g(S, Q)$ and $g_{unc}(S, Q)$ is that they use a different maximum likelihood estimator: $\hat{P}_Q$ is the ML estimator based on all instances in $Q$, while $\hat{P}_{Q_{unc}}$ is based on all instances in $Q_{unc}$. 

The algorithm is almost identical to Algorithm~\ref{alg:find_next_rule_excl_overlap}, with four small modifications: 1) the navigation heuristic is replaced by $g(S, Q)$; 2) $L_{\mathcal{T}}(\ruleset \oplus S)$ is used to select the best rule from the BeamRecord instead of $L_{\mathcal{T}}(\ruleset \oplus S_{unc})$ ; and 3) the coverage diversity is based on the rules itself instead of the counterparts; 4) only the best rule is returned.
\subsection{Iterative search for the rule set}
Algorithm~\ref{alg:ruleset} outlines the proposed rule set learner. We start with an empty rule set (Ln 1-2), then iteratively add the next best rule (Ln 3--9) until the stopping criterion is met (Ln 5--6). That is, it stops when 1) the surrogate score equals the `real' model selection criterion (i.e., the model's NML distribution), or 2) no more rules with positive NML-gain can be found. We record the `real' criterion when adding each rule to the set, and pick the one maximizing it (Ln 10). 

\section{Experiments} \label{sec:exp}
We demonstrate that \turs learns rule sets with competitive predictive performance, and that using the surrogate score substantially improves the AUC scores. Further, we demonstrate that \turs achieves model complexities comparable to other rule set methods for multi-class targets.

We here discuss the most important parts of the experiment setup; for completeness, additional information can be found in the Supplementary Material\footnote{The source code is available at \url{https://github.com/ylincen/TURS}}.

\smallskip
\noindent \textbf{Decision trees for surrogate score.} We use CART~\cite{breiman1984classification} to learn the trees for the surrogate score. For efficiency and robustness, we do not use any post-pruning for the decision trees but only set a minimum sample size for the leafs.

\smallskip
\noindent \textbf{Beam width and coverage diversity.} 
We set the coverage diversity $\alpha=0.05$, and beam width $w = 5$. With the coverage diversity as a constraint, we found that $w \in \{5, 10, 20\}$ all give similar results. Due to the limited space, we leave a formal sensitivity analysis of $\alpha$ as future work. 

\smallskip
\noindent \textbf{Benchmark datasets and competitor algorithms.} We test on $13$ UCI benchmark datasets (shown in Table~1), and compare against the following methods: 1) unordered CN2~\cite{clark1991cn2Improve}, the one-versus-rest rule sets method without implicit order among rules; 2) DRS~\cite{zhang2020diverseRuleSets}, a representative multi-class rule set learning method; 3) BRS~\cite{wang2017bayesian}, the Bayesian rule set method for binary classification; 4) RIPPER~\cite{cohen1995ripper}, the widely used one-versus-rest method with orders among class labels; 5) CLASSY~\cite{proencca2020interpretable}, the probabilistic rule list methods using MDL-based model selection; and 6) CART~\cite{breiman1984classification}, the well-known decision tree method, \emph{with} post-pruning by cross-validation. 

\begin{table}[ht]
\centering
\begin{tabular}{lcccc|ccc||c}
  \hline
data & TURS & CN2 & DRS & BRS & CLASSY & RIPPER & CART & TURS \%overlap \\ 
  \hline
anuran & 0.998 & 1.000 & 0.858 & --- & 0.983 & 0.999 & 0.996 & 0.395 \\ 
  avila & 0.968 & 0.978 & 0.530 & --- & 0.954 & 0.997 & 0.988 & 0.286 \\ 
  backnote & \textbf{0.991} & 0.969 & 0.945 & 0.957 & 0.987 & 0.979 & 0.984 & 0.297 \\ 
  car & \textbf{0.978} & 0.633 & 0.924 & --- & 0.945 & 0.980 & 0.971 & 0.063 \\ 
  chess & \textbf{0.995} & 0.536 & 0.823 & 0.945 & 0.991 & 0.995 & 0.994 & 0.264 \\ 
  contracept & \textbf{0.667} & 0.597 & 0.544 & --- & 0.630 & 0.626 & 0.600 & 0.074 \\ 
  diabetes & \textbf{0.766} & 0.677 & 0.628 & 0.683 & 0.761 & 0.735 & 0.661 & 0.155 \\ 
  ionosphere & 0.914 & 0.912 & 0.663 & 0.837 & 0.909 & 0.901 & 0.845 & 0.310 \\ 
  iris & 0.964 & 0.985 & 0.935 & --- & 0.960 & 0.973 & 0.965 & 0.018 \\ 
  magic & \textbf{0.886} & 0.590 & 0.695 & 0.794 & 0.895 & 0.818 & 0.800 & 0.500 \\ 
  tic-tac-toe & 0.972 & 0.826 & 0.971 & 0.976 & 0.983 & 0.954 & 0.847 & 0.231 \\ 
  waveform & \textbf{0.902} & 0.775 & 0.588 & --- & 0.833 & 0.884 & 0.803 & 0.528 \\ 
  wine & 0.954 & 0.962 & 0.810 & --- & 0.961 & 0.945 & 0.932 & 0.031 \\ 
  \hline
  Avg Rank & 2.231 & 4.077 & 5.846 & 5.462 & 3.154 & 3.000 & 4.231 & /\\ 
   \hline
\end{tabular}
\caption{ROC-AUC scores, averaged over 10 cross-validated folds. The rank (smaller means better) is further averaged over all datasets. Among the four \emph{rule set} methods, \turs is substantially better on 7 out 13 datasets (AUC scores in bold).}
\end{table}

\subsection{Results}
\begin{figure}[t]
    \centering
    \includegraphics[height=0.35\textwidth]{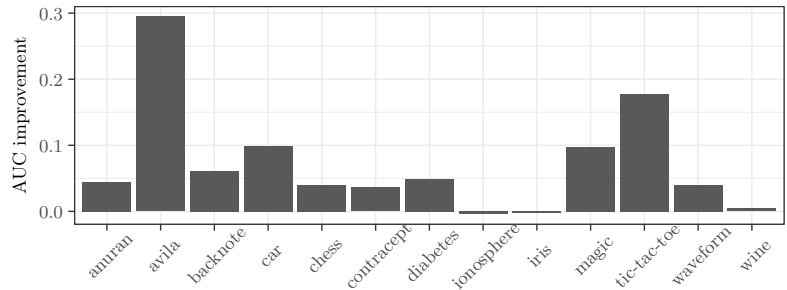}
    \caption{Improvement in AUC by enabling the surrogate score for \textsc{Turs}.}
    \label{fig:surrogate}
\end{figure}
\noindent \textbf{Predictive performance.}
We report the ROC-AUC scores in Table~1. For multi-class classification, we report the weighted one-versus-rest AUC scores, as was also used for evaluating the recently proposed CLASSY method~\cite{proencca2020interpretable}. 

Compared to non-probabilistic rule set methods---i.e., CN2, DRS, and BRS (only for binary targets)---\turs is much better in terms of the mean rank of its AUC scores. Specifically, it performs substantially better on about half of the datasets (shown in bold). 
Besides, it is ranked better than rule list methods, which produce explicitly ordered rules that may be difficult for domain experts to comprehend and digest in practice. 
Next, CART attains AUCs generally inferior to \textsc{Turs}, although it helps \turs to get a higher AUC as part of the surrogate score.

Last, we report the percentage of instances covered by more than one rule for \textsc{Turs} in Table~1, and we show that overlaps are common in the rule sets obtained for different datasets. This empirically confirms that our way of formalizing rule sets as probabilistic models, i.e., treating overlaps as uncertainty and exception, can indeed lead to improved predictive performance, as the overlapping rules are a non-negligible part of the model learned from data and hence indeed play a role. 



\noindent \textbf{Effects of the surrogate score.}
Figure~\ref{fig:surrogate} shows the difference in AUC obtained by our method with and without using the surrogate score (i.e., without surrogate score means replacing it with the final model selection criterion). We conclude that the surrogate score has a substantial effect on learning better rule sets, except for three ``simple" datasets, of which the sample sizes and the number of variables are small, as shown in Table~2 (Left).
\begin{table}[t]
\centering
\begin{tabular}{ll||lrrrr|rrr}
  \hline
\#instances & \#features &data & TURS & CN2 & DRS & BRS & CLASSY & RIPPER & CART \\ 
  \hline
1372 &   5&backnote & 42 & 41 & 55 & 22 & 22 & 16 & 94 \\ 
1473 &  10 &  contracept & 75 & 275 & 73 & --- & 14 & 14 & 6241 \\ 
 768 &   9 & diabetes & 55 & 152 & 131 & 10 & 10 & 6 & 827 \\ 
 150 &   5 & iris & 7 & 9 & 23 & --- & 3 & 3 & 9 \\ 
958 &  10 &  tic-tac-toe & 86 & 90 & 108 & 24 & 27 & 60 & 816 \\ 
178 &  14 &  wine & 10 & 6 & 134 & --- & 6 & 5 & 15 \\ 
1728 &   7 &   car & 211 & 163 & 325 & --- & 92 & 111 & 718 \\ 
7195 &  24 &  anuran & 74 & 37 & 407 & --- & 49 & 7 & 96 \\ 
3196 &  37 &  chess & 299 & 316 & 482 & 21 & 37 & 44 & 355 \\ 
  351 &  35 & ionosphere & 50 & 30 & 261 & 14 & 6 & 5 & 101 \\ 
  5000 &  22 & waveform & 707 & 802 & 60 & --- & 139 & 115 & 3928 \\ 
 20867 &  11 &     avila & 890 & 1296 & 179 & --- & 988 & 574 & 8145 \\ 
 19020 &  11 & magic & 1321 & 2238 & 48 & 23 & 256 & 69 & 22566 \\ 
  \hline
  &&Avg Rank &2.15 & 2.46 &2.77& 1.00& ---&---&--- \\
\hline
\end{tabular}
\caption{Left: The sample sizes and number of features of datasets. Right: total number of literals, i.e., average rule lengths $\times$ number of rules in the set, averaged over 10-fold cross-validation. The rank is averaged over all datasets, for rule sets methods only.}
\end{table}

\noindent \textbf{Model complexity.} 
Finally, we compare the `model complexity' of the rule sets for all methods. As this is hard to quantify in a unified manner, as a proxy we report the \emph{total number of literals in all rules in a rule set}, averaged over 10-fold cross-validation (the same as used for the results reported in Table~1).

We show that among all rule set methods (TURS, CN2, DRS, BRS), \textsc{Turs} has better average ranks than both CN2 and DRS. Although BRS learns very small rule sets, it is only applicable to binary targets and its low model complexity also brings worse AUC scores than \textsc{Turs}. Further, although rule list methods (CLASSY, RIPPER) generally have fewer literals than rule sets methods,  this does not make rule lists easy to interpret, as every rule depends on all previous rules. Last, we empirically confirm that tree-based method CART produces much larger rule sets.


\section{Conclusion}
We formalized the problem of learning truly unordered probabilistic rule sets as a model selection task. We also proposed a novel, tree-based surrogate score for evaluating incomplete rule sets. Building upon this, we developed a two-phase heuristic algorithm that learns rule set models that were empirically shown to be accurate in comparison to competing methods. 

For future work, we will study the practical use of our method with a case study in the health care domain. This involves investigating how well our method scales to larger datasets. Furthermore, a user study will be performed to investigate whether, and in what degree, the domain experts find the truly unordered property of rule sets obtained by our method helps them comprehend the rules better in practice, in comparison to rule lists/sets with explicit or implicit orders.

\smallskip
\small{
\noindent \textbf{Acknowledgements.} We are grateful for the very inspiring feedback from the anonymous reviewers. This work is part of the research programme `Human-Guided Data Science by Interactive Model Selection' with project number 612.001.804, which is (partly) financed by the Dutch Research Council (NWO).
}

%
%
%
\bibliographystyle{splncs04}
\bibliography{main}

\end{document}